\newtheorem{proposition}{Proposition}
\definecolor{codegreen}{rgb}{0,0.6,0}
\definecolor{codegray}{rgb}{0.5,0.5,0.5}
\definecolor{codepurple}{rgb}{0.58,0,0.82}
\definecolor{backcolour}{rgb}{0.95,0.95,0.92}
\lstdefinestyle{mystyle}{
    commentstyle=\color{codegreen},
    keywordstyle=\color{magenta},
    numberstyle=\tiny\color{codegray},
    stringstyle=\color{codepurple},
    basicstyle=\ttfamily\footnotesize,
    breakatwhitespace=false,         
    breaklines=true,                 
    captionpos=b,                    
    keepspaces=true,                 
    numbers=left,                    
    numbersep=5pt,                  
    showspaces=false,                
    showstringspaces=false,
    showtabs=false,                  
    tabsize=2
}
\crefname{section}{Sec.}{Secs.}
\Crefname{section}{Section}{Sections}
\Crefname{table}{Table}{Tables}
\crefname{table}{Tab.}{Tabs.}
\begin{document}

\title{A Unified Framework for Contrastive Learning\\ from a Perspective of Affinity Matrix}

\author{Wenbin Li\textsuperscript{$1$}, Meihao Kong\textsuperscript{$1$}, Xuesong Yang\textsuperscript{$1$}, Lei Wang\textsuperscript{$2$}, Jing Huo\textsuperscript{$1$},Yang Gao\textsuperscript{$1$}, Jiebo Luo\textsuperscript{$3$} \\
\textsuperscript{$1$}State Key Laboratory for Novel Software Technology, Nanjing University, China \\
\textsuperscript{$2$}University of Wollongong, Australia,
\textsuperscript{$3$}University of Rochester, USA\\
}

\maketitle

\begin{abstract}
   In recent years, a variety of contrastive learning based unsupervised visual representation learning methods have been designed and achieved great success in many visual tasks. Generally, these methods can be roughly classified into four categories: (1) standard contrastive methods with an InfoNCE like loss, such as MoCo and SimCLR; (2) non-contrastive methods with only positive pairs, such as BYOL and SimSiam; (3) whitening regularization based methods, such as W-MSE and VICReg; and (4) consistency regularization based methods, such as CO2. In this study, we present a new unified contrastive learning representation framework (\textit{named UniCLR}) suitable for all the above four kinds of methods from a novel perspective of basic affinity matrix. Moreover, three variants, \emph{i.e.,} \textit{SimAffinity}, \textit{SimWhitening} and \textit{SimTrace}, are presented based on UniCLR. In addition, \textit{a simple symmetric loss}, as a new consistency regularization term, is proposed based on this framework. By symmetrizing the affinity matrix, we can effectively accelerate the convergence of the training process. Extensive experiments have been conducted to show that (1) the proposed UniCLR framework can achieve superior results on par with and even be better than the state of the art, (2) the proposed symmetric loss can significantly accelerate the convergence of models, and (3) SimTrace can avoid the mode collapse problem by maximizing the trace of a whitened affinity matrix without relying on asymmetry designs or stop-gradients.
\end{abstract}

\section{Introduction}
With the goal of learning effective visual representations without human supervision, self-supervised learning (SSL), especially the contrastive learning based SSL, has attracted increasing attention in recent years~\cite{wu2018unsupervised,he2020momentum,chen2020simple,chen2021exploring,grill2020bootstrap,tian2020makes,caron2020unsupervised,zbontar2021barlow,PixPro_CVPR2021,chen2021empirical}. A variety of SSL methods have been proposed and significantly advanced the progress of unsupervised visual representation learning. Typically, these methods could be roughly divided into four categories: (1) standard contrastive SSL, (2) non-contrastive SSL, (3) whitening regularization based SSL, and (4) consistency regularization based SSL, where the latter two can be easily integrated with the former two.

Specifically, standard contrastive SSL methods, such as MoCo~\cite{he2020momentum,chen2020improved} and SimCLR~\cite{chen2020simple}, normally maximize the agreement between two different augmentations of the same image, \emph{i.e.,} a positive pair, by simultaneously contrasting a lot of different images, \emph{i.e.,} many negative pairs. Generally, such a kind of method requires either a large memory bank or a large batch size to achieve excellent performance. In contrast, the non-contrastive SSL methods, such as SwAV~\cite{caron2020unsupervised}, BYOL~\cite{grill2020bootstrap} and SimSiam~\cite{chen2021exploring}, try to discard the massive negative pairs and instead only maximize the similarity between positive pairs. To avoid the collapse problem for not using negative pairs, different asymmetric designs are commonly used, such as asymmetric embedding network (\emph{e.g.,} additional predictor network)~\cite{grill2020bootstrap,chen2021exploring}, momentum encoder~\cite{grill2020bootstrap}, non-differentiable operators (\emph{e.g.,} clustering with Sinkhorm-Knopp algorithm~\cite{SK_NeurIPS2013})~\cite{caron2020unsupervised}, and stop-gradients~\cite{grill2020bootstrap,chen2021exploring}. Similarly, the whitening regularization based SSL methods, such as W-MSE~\cite{W_MSE_ICML2021}, Barlow Twins~\cite{zbontar2021barlow} and VICReg~\cite{VICReg_ICLR2022}, employ whitening with a decorrelation operation to avoid the collapse representations in the non-contrastive SSL without requiring asymmetric designs. Different from the above three kinds of methods, consistency regularization based SSL methods, such as CO2~\cite{CO2_ICLR2021}, use a semi-supervised consistency regularization to alleviate the problem of false negatives caused by the use of the instance discrimination pretext task.

In this paper, we propose a \textit{simple UNIfied Contrastive Learning Representation framework, called UniCLR}, to unify the above four kinds of SSL methods to the maximum extent. The motivation is that although these methods are different in terms of their motivations or the employed techniques, they are essentially closely related to each other. From a new perspective of affinity matrix, we notice that these methods can be unified into a single and much simpler framework. That is to say, most of the existing SSL methods would be a special case or variant of this framework. Based on the UniCLR framework, we first present a simple yet effective baseline, \textit{SimAffinity}, by directly optimizing the affinity matrix of two different views of the same mini-batch with the cross-entropy loss. After that, we propose a simple whitening based baseline based on SimAffinity, named \textit{SimWhitening}, by incorporating the whitening operation. Moreover, based on SimWhitening, we further design a novel non-contrastive SSL method, named \textit{SimTrace}, by directly optimizing the trace of a whitened affinity matrix. In particular, SimTrace does not rely on asymmetry designs or regularization terms to avoid the collapse problem. In addition, we propose \textit{a new symmetric consistency regularization loss} by symmetrizing the affinity matrix, which can significantly accelerate the convergence of unsupervised SSL models.

To verify the effectiveness and scalability of this new framework, we conduct extensive experiments on ImageNet-1K~\cite{deng2009imagenet} and three small benchmark datasets, including multiple ablation studies to investigate the major components of UniCLR. Importantly, as a unified framework, our UniCLR can be flexibly integrated with some specific tricks, such as momentum encoder~\cite{grill2020bootstrap} and multi-crop~\cite{caron2020unsupervised}, to achieve new state-of-the-art results on ImageNet-1K. Also, all the variants of UniCLR can obtain competitive or even much better results than the corresponding SSL methods. Specifically, under the linear evaluation protocol, SimAffinity achieves $73.8\%$ top-1 accuracy on ImageNet-1K with 200-epoch training, which is a $3.2\%$ absolute improvement over BYOL~\cite{grill2020bootstrap}.

In summary, our main contributions are as follows:
\begin{itemize}
    \item We present \textit{UniCLR}, a unified contrastive learning framework, to represent multiple different kinds of SSL methods. In addition, three new variants derived from UniCLR, especially \textit{SimAffinity}, can achieve superior results over the state of the art.
    \item We propose \textit{SimTrace}, a novel non-contrastive SSL method built on UniCLR, by optimizing the affinity matrix through a trace operation. Theoretical analysis is provided to show that the affinity matrix and covariance matrix can be simultaneously constrained with a single loss term.
   \item We propose \textit{a new symmetric consistency regularization loss} on the affinity matrix, which can significantly accelerate the convergence of models and boost the performance by alleviating the issue of false negatives.
   \item We comprehensively demonstrate the effectiveness of the proposed framework and the newly designed variants on multiple benchmarks, and provide new strong baselines for SSL.
\end{itemize}

\section{Related Work}
\textbf{Contrastive Learning.} 
Contrastive learning based SSL methods aim to learn a feature embedding space by simultaneously maximizing the agreement between positive examples and the disagreement with negative examples. That is to say, both positive and negative examples will be involved in the unsupervised pre-training. A variety of methods~\cite{hjelm2019learning,oord2018representation,henaff2020data,chen2020simple,he2020momentum,chen2020improved,hu2021adco} have been proposed along this paradigm, such as NPID~\cite{wu2018unsupervised}, CMC~\cite{tian2019contrastive}, CPC v2~\cite{henaff2020data}, InfoMin~\cite{tian2020makes}, MoCo~\cite{he2020momentum,chen2020improved}, SimCLR~\cite{chen2020simple}, PCL~\cite{PCL_ICLR2021}, AdCo~\cite{hu2021adco} and so on. In generally, an InfoNCE like loss is used and optimized in this type of methods, so as to employ a large memory bank or a large batch size to obtain massive negative examples for excellent performance.

\textbf{Non-Contrastive Learning.}
Different from the contrastive SSL methods, non-contrastive learning based SSL methods try to completely discard the large number of negative examples for reducing the computation overhead. Also, to avoid trivial solutions caused by not using negatives as anchors, some asymmetric structures or operations are normally designed, such as asymmetric embedding network~\cite{grill2020bootstrap,chen2021exploring}, momentum encoder~\cite{grill2020bootstrap}, non-differentiable operators~\cite{caron2020unsupervised}, and stop-gradients~\cite{grill2020bootstrap,chen2021exploring}. For example, BYOL~\cite{grill2020bootstrap} predicts the representation of one view via another view based on bootstrapping by using an online network and a momentum network, where both asymmetric embedding network and stop-gradients are used. After that, SimSiam~\cite{chen2021exploring} discards the momentum encoder in BYOL and shows that only using an asymmetric embedding network and stop-gradients can achieve excellent performance with a small batch size. Recently, some latest works attempt to introduce vision transformers instead of convolutional neural networks into SSL, such as DINO~\cite{caron2021emerging} and EsViT~\cite{li2021efficient}.

\textbf{Whitening Regularization.}
Whitening transformation is a classical pre-processing step to transform random variables to orthogonality, which has been frequently introduced into SSL in recent works. Specifically, W-MSE~\cite{W_MSE_ICML2021} scatters all the sample representations into a spherical distribution using a whitening transform and penalizes the positive pairs which are far from each other to avoid the collapse problem. Barlow Twins~\cite{zbontar2021barlow} proposes to makes the cross-correlation matrix as close to the identity matrix as possible by reducing redundancy between their components to learn unsupervised representations. In addition, VICReg~\cite{VICReg_ICLR2022} explicitly makes the embedding vectors of samples within a batch to be different via a new variance term based on the former work of Barlow Twins. DBN~\cite{DBN} proposes to standardize the variance and the covariance matrix to address the complete collapse and dimensional collapse issues, respectively. Recently, Zero-CL~\cite{Zero-CL} fully combines both instance-wise whitening and feature-wise whitening into the contrastive learning to prevent trivial solutions.

\begin{figure*}[!tp]
\centering
\includegraphics[width=0.65\textwidth]{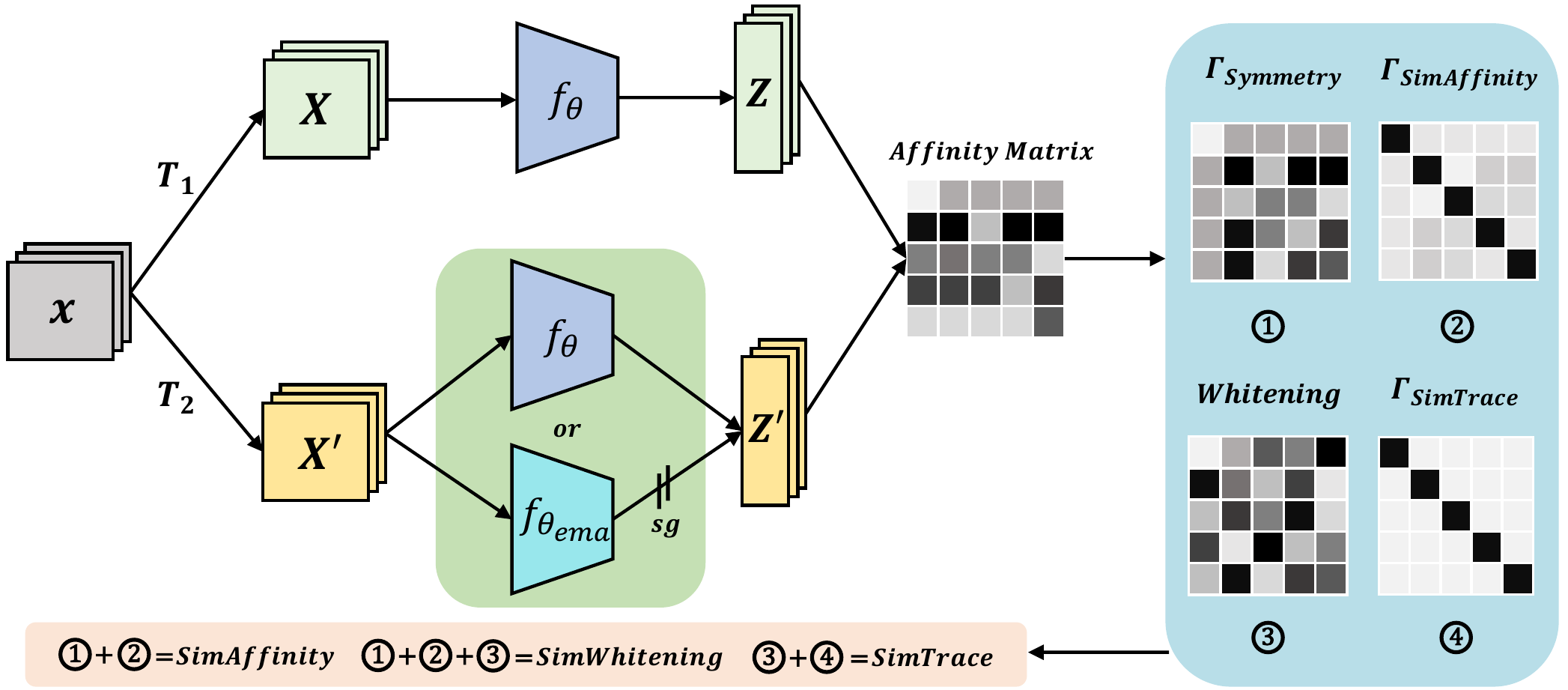}
\vspace{-1mm}
\caption{Overall architecture of the proposed UniCLR framework. Given a mini-batch of raw images $\bm{x}$, two sets of views $\bm{X}$ and $\bm{X'}$ are obtained via data augmentation and then encoded into representations $\bm{Z}$ and $\bm{Z'}$ through the weights-shared Siamese networks $f_{\theta}$ or one of which $f_{\theta_{ema}}$ is updated with the exponential moving average (EMA) of $f_{\theta}$ with a stop-gradient (\emph{i.e.,} sg) operation. The affinity matrix between $\bm{Z}$ and $\bm{Z'}$ is finally calculated, forming the basis of the variants of UniCLR, \emph{i.e.,} SimAffinity, SimWhitening and SimTrace.}
\vspace{-3mm}
\label{Framwork}
\end{figure*}

\textbf{Consistency Regularization.}
As a core idea of many state-of-the-art semi-supervised learning methods~\cite{berthelot2019mixmatch,sohn2020fixmatch}, consistency regularization is also introduced into the field of SSL to further boost the performance of unsupervised models, recently. For example, CO2~\cite{CO2_ICLR2021} exploits the consistency assumption that a robust model should generate similar representation vectors on perturbed versions of the same image to encourage the consistent similarities between image crops. Similarly, TWIST~\cite{TWIST} enforces the class probability distributions of two augmented views of the same image to be consistent while regularizing the distributions to make them sharp and diverse. Both of methods demonstrate the effectiveness of the consistency regularization term through extensive experiments.

Our UniCLR framework is closely related to all the above four types of methods. The key difference is that UniCLR attempts to unify the above four types of methods into a simple framework from a new affinity matrix perspective. Moreover, three variants, \emph{i.e.,} SimAffinity, SimWhitening and SimTrace, are designed based on UniCLR and show remarkable performance.

\section{Methods}
\subsection{An Affinity Matrix Perspective}
\label{section_3_1}
Following the literature~\cite{he2020momentum,chen2020simple,chen2021exploring,grill2020bootstrap,tian2020makes}, let $\mathcal{X}=\{X_1, \cdots, X_N\}$ and $\mathcal{X'}=\{X'_1, \cdots, X'_N\}$ denote two different views of the same mini-batch of images via data augmentation. As shown in Fig.~\ref{Framwork}, through an embedding network $f_\theta(\cdot)$, which is composed of a backbone and a multilayer perceptron (MLP) based projector, we can obtain the feature matrices of the two batches of images, \emph{i.e.,} $Z=f_\theta(\mathcal{X})\in\mathbb{R}^{D\times N}$ and $Z'=f_\theta(\mathcal{X'})\in\mathbb{R}^{D\times N}$, where $D$ is the dimension of features, $N$ is the number of raw images in each mini-batch and $\theta$ denotes the parameters of the embedding network. Also, if we want to introduce the momentum encoder, one view of images can be represented as $Z'=f_{\theta_{ema}}(\mathcal{X'})\in\mathbb{R}^{D\times N}$, where the parameters $\theta_{ema}$ are updated with the exponential moving average (EMA) of the parameters of $f_\theta(\cdot)$ as well as a stop-gradient operation.

\textbf{SimAffinity: A simple affinity matrix based baseline.} As one representative framework, SimCLR~\cite{chen2020simple} normally constructs all the possible negative pairs, \emph{i.e.,} $2(N-1)$ negative pairs, for each positive pair. However, incorporating such many negative pairs will significantly increase the computational overhead. Different from SimCLR, in this work, we consider the contrastive learning task from a new perspective of affinity matrix, \emph{i.e.,} only $N-1$ negatives are considered for each positive. To be specific, an affinity matrix based contrastive learning loss can be formulated as: 
\begin{equation}\label{fun1}\small
\begin{split}
 \mathnormal{\Gamma}_\text{SimAffinity}=\mathcal{L}^\text{CE}(Z^\top Z',Y)\,,
\end{split}
\end{equation}
where $Z^\top Z'\in \mathbb{R}^{N\times N}$ is the affinity matrix between $Z$ and $Z'$, both of which are generally $\ell_2$-normalized along the feature dimension, $Y\in \mathbb{R}^{N\times N}$ denotes the pseudo labels whose each row is a one-hot vector, and $\mathcal{L}^\text{CE}$ is the cross-entropy loss function.

We highlight that SimAffinity can be a representative to represent the InfoNCE loss based standard contrastive SSL methods, such as MoCo and SimCLR (\textit{further explanation can be seen in the Appendix~\ref{appendix_C}}). More importantly, in Section~\ref{section_experiments}, we will show that SimAffinity can significantly overcome MoCo and SimCLR with a simpler formulation by incorporating a new tailored symmetric loss.

\textbf{SimWhitening: A simple whitening-based baseline.} Typically, if we combine $Z$ with $Z'$ together, \emph{i.e.,} $\hat{Z}=\{Z, Z'\}\in\mathbb{R}^{D\times 2N}$, we can obtain the mean vector $\bm{\mu}=mean(\hat{Z})\in\mathbb{R}^D$ and covariance matrix $\Sigma=(\hat{Z}-\bm{\mu})(\hat{Z}-\bm{\mu})^\top\in\mathbb{R}^{D\times D}$. Suppose we are able to obtain the whitening matrix $W=\Sigma^{-\frac{1}{2}}\in\mathbb{R}^{D\times D}$ by eigen-decomposition, we have the whitened features $\bar{Z}=W\cdot Z=\Sigma^{-\frac{1}{2}}Z$ and $\bar{Z'}=W\cdot Z'=\Sigma^{-\frac{1}{2}}Z'$, where both $Z$ and $Z'$ have been subtracted with the mean $\bm{\mu}$. In this sense, we calculate the affinity matrix again and obtain the following loss function:
\begin{equation}\label{fun2}\small
\begin{split}
 \mathnormal{\Gamma}_\text{SimWhitening}&=\mathcal{L}^\text{CE}(\bar{Z}^\top\bar{Z'},Y)\\
                    &=\mathcal{L}^\text{CE}\big((\Sigma^{-\frac{1}{2}}Z)^\top\Sigma^{-\frac{1}{2}}Z',Y\big) \\
                    &=\mathcal{L}^\text{CE}\big(Z^\top\Sigma^{-\frac{1}{2}}\Sigma^{-\frac{1}{2}}Z',Y\big) \\
                    &=\mathcal{L}^\text{CE}\big(Z^\top\Sigma^{-1}Z',Y\big)\,.
\end{split}
\end{equation}
As seen, the affinity matrix after whitening is essentially a bilinear similarity between $Z$ and $Z'$ incorporating with the inverse of covariance matrix.

Different from the existing whitening based SSL methods that mainly consider whitening under the non-contrastive setting, here we show that the whitening operation can be inserted into the contrastive setting with a single loss function.

\textbf{SimTrace: A simple trace-based baseline.} Based on the above Eq.~(\ref{fun2}), we are able to obtain a non-contrastive method \textit{SimTrace} by applying a trace operation on the whitened affinity matrix. That is to say, the loss function of SimTrace can be formulated as follows: 
\begin{equation}\label{fun3}\small
\begin{split}
 \mathnormal{\Gamma}_\text{SimTrace}=-\text{Trace}\big(Z^\top\Sigma^{-1}Z')\,,
\end{split}
\end{equation}
where $\text{Trace}(\cdot)$ denotes the trace of a square matrix. That is to say, by utilizing a simple trace operation, the sum of elements on the main diagonal of the whitened affinity matrix is maximized. In another word, the sum of similarities between positive pairs is taken as the objective to maximize, which is in line with the core idea of non-contrastive SSL methods. In addition, according to the basic properties of trace (\textit{the proof is in the Appendix~\ref{appendix_B}}), we have 
\begin{equation}\label{fun4}\small
\begin{split}
 \mathnormal{\Gamma}_\text{SimTrace}=-\text{Trace}\big(\Sigma^{-1}Z'Z^\top)\,,
\end{split}
\end{equation}
where $Z'Z^\top$ is in fact the cross-covariance matrix between $Z'$ and $Z$, and $\Sigma^{-1}$ here can be seen as a linear transformation matrix. In this sense, the covariance matrix is essentially implicitly considered as an objective to be optimized with the same single trace loss function.

We highlight that SimTrace can typically represent the whitening based non-contrastive SSL methods, such as W-MSE~\cite{W_MSE_ICML2021}, Barlow Twins~\cite{zbontar2021barlow} and VICReg~\cite{VICReg_ICLR2022}. Notably, like these whitening based SSL methods, SimTrace can naturally avoid the collapse problem by the whitening operation with a weights-shared Siamese network, rather than requiring either asymmetric network designs or gradient stopping for achieving this purpose. On the contrary, these asymmetric designs (like training tricks), \eg, asymmetric predictor~\cite{grill2020bootstrap,chen2021exploring}, can be flexibly integrated with SimTrace to further enhance its performance. More importantly, different from the existing methods, \textit{SimTrace does not require explicitly calculating the matrix square rooting $\Sigma^{-\frac{1}{2}}$ or explicitly performing the whitening operation} (\emph{i.e.,} do not need to obtain the whitening matrix $\Sigma^{-\frac{1}{2}}$ by eigen-decomposition or Cholesky decomposition). We even do not use $\ell_2$ normalization for SimTrace.

\textbf{Symmetry: A simple symmetric loss.} One limitation of the instance discrimination pretext task is that two different images will be crudely taken as two different pseudo classes even though they may belong to the same semantic class. In other words, the false negative problem clearly exists and has not yet been effectively addressed in contrastive learning based SSL. To address this problem, CO2~\cite{CO2_ICLR2021} introduces a Kullback-Leibler (KL) divergence based consistency regularization term to encourage the consistency between the similarities of the positive pair with respect to all the negatives. Different from CO2, benefitting from using affinity matrix as the objective, we propose a much simpler symmetric consistency regularization loss as below:  
\begin{equation}\label{fun5}\small
\begin{split}
 \mathnormal{\Gamma}_\text{Symmetry}= \|Z^\top Z'-Z'^\top Z\| \ \ \text{or}\ \ \|Z^\top\Sigma^{-1}Z'-Z'^\top\Sigma^{-1}Z\|\,,
\end{split}
\end{equation}
where $\|\cdot\|$ denotes the Frobenius norm of matrix. That is to say, we just need to ensure the symmetry of the obtained affinity matrix, which can somewhat achieve the effect of alleviating the false negative problem in SSL. This is because, in general, we will only use the pseudo one-hot ground truth to maximize the similarity between the positive pair while minimizing the similarities between all negative pairs, including the false negative pairs. In contrast, using such a symmetry loss, the relative similarities (which are normally large) of the false negative pairs will be largely maintained to some extent (\textit{further explanation is shown in the Appendix~\ref{appendix_E}}).

Another difference with the work of CO2~\cite{CO2_ICLR2021} is that, the two views of negatives are shuffled together in CO2, while our symmetric loss encourages the consistency between one view and another view by calculating the cross positive-negative similarity. In other words, the pair of positive and negative comes from different views, which can benefit the final performance owing to the diversity. More importantly, another advantage of the proposed symmetric loss is that it can effectively avoid the saturated gradients suffered of using KL divergence.

\subsection{UniCLR: A UNIfied Contrastive Learning Representation Framework}
According to Section~\ref{section_3_1}, we are able to design and unify four different kinds of SSL methods into a single framework from a perspective of affinity matrix. Therefore, we formulate the overall framework as follows:
\begin{equation}\label{fun6}\small
\begin{split}
 \mathnormal{\Gamma}_\text{UniCLR}=\mathcal{L}\big(Z^\top\textcolor{blue}{\Sigma^{-1}}Z'/\tau\big) +\gamma\cdot \|Z^\top\textcolor{blue}{\Sigma^{-1}}Z'-Z'^\top\textcolor{blue}{\Sigma^{-1}}Z\|\,,
\end{split}
\end{equation}
where $\tau$ is an optional temperature parameter, $\textcolor{blue}{\Sigma^{-1}}$ is also optional according to whether whitening is used or not, and $\gamma$ is a hyper-parameter controlling the importance of the symmetry regularization term. Specifically, for the variant \textit{SimTrace}, only the first term of Eq.~(\ref{fun6}) is used and $\mathcal{L}(\cdot)$ is replaced with a negative trace operation (see Eq.~(\ref{fun3})). As for the variants \textit{SimAffinity} and \textit{SimWhitening}, both of the two terms of Eq.~(\ref{fun6}) are normally retained and the standard cross-entropy loss is used as $\mathcal{L}(\cdot)$. In addition, we observe that for the contrastive learning setting, \emph{i.e.,} negatives are involved in the optimization process, $\ell_2$ normalization (\emph{i.e.,} cosine similarity) benefits the final performance. Therefore, for \textit{SimWhitening}, $\ell_2$ normalization is performed on each view, which means that Cholesky decomposition is first performed before calculating the affinity matrix.

Note that UniCLR mainly focuses on designing a unified loss function for contrastive-learning based SSL methods, and do not concern about the architecture designs of the network (\eg, asymmetric predictor~\cite{grill2020bootstrap,chen2021exploring}, momentum encoder~\cite{grill2020bootstrap}, and stop-gradients~\cite{grill2020bootstrap,chen2021exploring}) or exploration of training tricks (\eg, multi-crop~\cite{caron2020unsupervised} and strong data augmentation~\cite{tian2020makes}). Especially the three variants of UniCLR can naturally avoid the collapse solution with only the proposed loss designs. On the other hand, as a flexible framework, the existing architecture designs or training tricks can be easily integrated into UniCLR, which will be demonstrated in the experimental part.

\section{Experiments}
In this section, we conduct experiments including extensive ablation studies on multiple benchmark datasets to compare UniCLR, especially SimAffinity, SimWhiteining and SimTrace, with the state-of-the-art methods.

\subsection{Implementation}
\textbf{Datasets.}
Four commonly used benchmark datasets, including CIFAR-10~\cite{krizhevsky2009learning}, CIFAR-100~\cite{krizhevsky2009learning}, ImageNet-100~\cite{tian2019contrastive} and ImageNet-1K~\cite{deng2009imagenet} are adopted in our paper, whose details are shown in the Appendix~\ref{appendix_A}.

\textbf{Image augmentation.}
Following the literature~\cite{chen2020simple,chen2021exploring}, we apply the common data augmentations from PyTorch for all models, including (a) RandomResizedCrop with the scale in the range of $[0.2, 1.0]$, (b) Randomly applying ColorJitter with a probability of $0.8$, (c) RandomGrayscale with a probability of $0.2$, (d) Randomly applying GaussianBlur with the sigma in the scope of $[0.1, 2.0]$ and the probability of $0.5$, (e) RandomHorizontalFlip with a probability of $0.5$.

\textbf{Architecture.}
For all models, the encoder consists of a ResNet~\cite{he2016deep} backbone followed by a MLP-based projector network. Note that $f_{\theta}$ and $f_{\theta_{ema}}$ shown in Fig.~\ref{Framwork} have the same architecture except for using different parameters update strategies. That is to say, $f_{\theta_{ema}}$ uses exponential moving average as well as the stop-gradient operation to update the parameters. Specifically, we utilize a ResNet-50 backbone with 2048 output units as well as a projector network of three linear layers (each of which has 2048 output units) as our encoder for ImageNet-1K and ImageNet-100. For CIFAR-10 and CIFAR-100, the backbone is a ResNet-18 (with 512 output units) and the projector network also has three layers, but with 2048, 2048, 256 output units, respectively. Note that all the first two layers in the projector networks are followed by a batch normalization (BN) layer and a rectified linear unit (ReLU) layer, while only the BN layer is used in the last MLP layer.

\textbf{Optimization.}
Following the literature, we use a LARS optimizer with the momentum of $0.9$ and weight decay of $1e-6$, and use a cosine learning rate scheduler with a 10-epoch warmup on ImageNet-1K. Typically, the learning rate is set to $lr=\frac{batch\_size}{256} \times base\_lr$ and $base\_lr$ is set to $0.05$ for ImageNet-1K and ImageNet-100 while $0.03$ for CIFAR-10 and CIFAR-100. If not specified, all the experiments are trained $200$ epochs with the batch size of $512$ on ImageNet-1K. For the experiments on the small datasets, our all models are trained $200$ epochs on ImageNet-100 and $1000$ epochs on CIFAR-10 and CIFAR-100. The hyper-parameter $\gamma$ is set to $0.01$ by default.

\textbf{Evaluation protocol.}
Following the literature~\cite{zhang2016colorful,oord2018representation,chen2020simple,grill2020bootstrap,chen2021exploring}, we adopt a linear evaluation protocol to evaluate the features learned by different variants of UniCLR on all datasets. To be specific, given a ResNet backbone after unsupervised pre-training, a random initialized linear classifier as well as a global average pooling layer is further trained on top of the frozen backbone. More specifically, for datasets of CIFAR-10, CIFAR-100 and ImageNet-100, the linear classifier is trained for $100$ epochs using a SGD optimizer with a cosine decay schedule, where the base learning rate is $30$, weight decay is $0$, momentum is $0.9$ and batch size is $256$. As for ImageNet-1K, the linear classifier is learned with a LARS optimizer~\cite{LARS} and a cosine decay schedule for $90$ epochs, where the base learning rate is set to $0.02$, weight decay is $0$, momentum is $0.9$, and batch size is of $1024$. After training, we perform the evaluation on the center cropped images on the evaluation or test set. 

\begin{table*}[!tp] \small
\extrarowheight=-2pt
\tabcolsep=4pt
\centering
\caption{\textbf{Ablation study of UniCLR on ImageNet-1K}. All unsupervised methods are trained with a  \textit{ResNet-50} backbone and reported with the top-$1$ accuracy (\%).}
\vspace{-2mm}
\begin{tabular}{l c c c c c | c c c c}
\toprule[1pt]
 \multirow{2}{*}{Method}  &\multirow{2}{*}{Epochs}  &\multirow{2}{*}{Batch Size}  &\multicolumn{3}{c|}{\textit{Components in UniCLR}} &\multicolumn{3}{c}{\textit{More training tricks}}  &\multirow{2}{*}{Top-1}\\
\cmidrule{4-6} \cmidrule{7-9}
                          &                         & & Temperature  &Symmetric Loss   & Whitening  &Momentum  &Multi-crop  &Predictor   &  \\ 
\midrule
\multirow{7}{*}{\textbf{SimWhitening}}   &$200$  & $512$   &\XSolidBrush  &\XSolidBrush  &\Checkmark  &\XSolidBrush  &\XSolidBrush &\XSolidBrush   &58.01 \\ 
                                         &$200$  & $512$   &\Checkmark    &\XSolidBrush  &\Checkmark  &\XSolidBrush  &\XSolidBrush &\XSolidBrush   &66.64 \\ 
                                         &$200$  & $512$   &\XSolidBrush  &\Checkmark    &\Checkmark  &\XSolidBrush  &\XSolidBrush &\XSolidBrush   &\textbf{68.88} \\ 
                                         &$200$  & $512$   &\Checkmark    &\Checkmark    &\Checkmark  &\XSolidBrush  &\XSolidBrush &\XSolidBrush   &68.78 \\ 
                                        \cdashline{4-10}[1pt/2pt] 
                                         &$200$  & $512$   &\Checkmark    &\Checkmark    &\Checkmark  &\Checkmark    &\XSolidBrush &\XSolidBrush   &69.88 \\ 
                                         &$200$  & $512$   &\Checkmark    &\Checkmark    &\Checkmark  &\Checkmark    &\Checkmark   &\XSolidBrush   &70.74 \\ 
                                         &$200$  & $512$   &\Checkmark    &\Checkmark    &\Checkmark  &\Checkmark    &\Checkmark   &\Checkmark     &\textbf{71.67} \\ 
\midrule
\multirow{7}{*}{\textbf{SimAffinity}}    &$200$  & $512$   &\XSolidBrush  &\XSolidBrush  &\XSolidBrush &\XSolidBrush  &\XSolidBrush &\XSolidBrush  &64.46 \\ 
                                         &$200$  & $512$   &\Checkmark    &\XSolidBrush  &\XSolidBrush &\XSolidBrush  &\XSolidBrush &\XSolidBrush  &68.43 \\ 
                                         &$200$  & $512$   &\XSolidBrush  &\Checkmark    &\XSolidBrush &\XSolidBrush  &\XSolidBrush &\XSolidBrush  &69.73 \\ 
                                         &$200$  & $512$   &\Checkmark    &\Checkmark    &\XSolidBrush &\XSolidBrush  &\XSolidBrush &\XSolidBrush  &\textbf{69.77} \\ 
                                         \cdashline{4-10}[1pt/2pt] 
                                         &$200$  & $512$   &\Checkmark    &\Checkmark    &\XSolidBrush &\Checkmark    &\XSolidBrush &\XSolidBrush  &70.32 \\ 
                                         &$200$  & $512$   &\Checkmark    &\Checkmark    &\XSolidBrush &\Checkmark    &\Checkmark   &\XSolidBrush  &72.93 \\ 
                                         &$200$  & $512$   &\Checkmark    &\Checkmark    &\XSolidBrush &\Checkmark    &\Checkmark   &\Checkmark  &\textbf{73.84} \\ 
\bottomrule
\end{tabular} 
\vspace{-3mm}
\label{table:ablation_study_imagenet1k}
\end{table*}

\subsection{Performance on ImageNet-1K}
\label{section_experiments}
\subsubsection{Ablation Study of UniCLR on ImageNet-1K}
To systematacially investigate the effects of the major components in the UniCLR framework, we conduct an ablation study on ImageNet-1K. Specifically, two variants of UniCLR are implemented, \emph{i.e.,} SimWhitening and SimAffinity, and three key components are specially analyzed, including the \textit{temperature $\tau$}, the \textit{symmetric loss} and the \textit{whitening operation}. Note that we do not include SimTrace, because using temperature or symmetric loss is meaningless for SimTrace due to the property of trace operation, while using whitening is crucial for SimTrace. In addition, three general tricks, including momentum encoder, multi-crop and asymmetric predictor, are also evaluated. From the results in Table~\ref{table:ablation_study_imagenet1k}, we have the following observations:
\begin{itemize}
    \item \textbf{Temperature can notably boost the performance.} From the results, we can see that the simple use of temperature, \emph{i.e.,} $\tau=0.5$, can notably boost the performance of SimWhitening and SimAffinity from $58.01\%$ to $66.64\%$ ($8.63\%$ absolute improvements) and from $64.46\%$ to $68.43\%$ ($3.97\%$ absolute improvements), respectively. This is because a small temperature can reweight the importance of different negatives, especially hard negatives, which is consistent with the analysis in SimCLR~\cite{chen2020simple}. Although using temperature has been a default operation in SSL, we believe that its huge effect and potential are somewhat overlooked.
    \item \textbf{Symmetric loss can notably boost the performance.} As shown in Table~\ref{table:ablation_study_imagenet1k}, the symmetric loss can significantly improve the performance of SimWhitening and SimAffinity from $58.01\%$ to $68.88\%$ ($10.87\%$ absolute improvements) and from $64.46\%$ to $69.73\%$ ($5.27\%$ absolute improvements), respectively. This strongly demonstrates the effectiveness of the proposed symmetric loss. The reason is that, as a regularization method, the symmetric loss can effectively alleviate the false negative problem, thus benefits the training.
    \item \textbf{Whitening will damages the performance of contrastive methods, when negatives are involved.} As seen, in most cases, SimWhitening (using whitening) performs worse than SimAffinity (not using whitening). The reason may be that the decorrelation of the whitening operation will exacerbate the problem of false negatives. In other words, the decorrelation will draw the positive pair and false negatives closer to some extent, however, the contrasting with the false negatives in contrastive methods will make unsupervised models confused. Fortunately, we find that the performance loss caused by whitening can be recovered by using either temperature or the proposed symmetric loss, which verifies the above guess.
     \item \textbf{More training tricks can further enhance the performance.} The effect of multi-crop has been demonstrated by many recent SSL works~\cite{caron2020unsupervised,W_MSE_ICML2021}. Therefore, no surprise, multi-crop can consistently improve the performance of both SimWhitening and SimAffinity. A new finding is that using momentum encoder can effectively improve the performance. Another finding is that, asymmetric predictor can be used as a training trick for contrastive-learning based SSL methods to further significantly improve the final performance.
     \item \textbf{The simple SimAffinity can achieve the best result.} As seen, using the common temperature and a simple symmetric loss, SimAffinity, a simple enough baseline, can achieve a remarkable accuracy of $69.77\%$ on ImageNet-1K. With using some generic training tricks, SimAffinity could obtain the best result in Table~\ref{table:ablation_study_imagenet1k}. This further demonstrates the excellent potential and strong scalability of the proposed UniCLR framework.
\end{itemize}

\begin{table*}[!tp] \small
\extrarowheight=-2pt
\tabcolsep=5pt
\centering
\caption{\textbf{Linear evaluation on ImageNet-1K}. All methods are trained with \textit{ResNet-50} and top-$1$ accuracy (\%) is reported. \textit{Cont.}, \textit{Non-Cont.}, \textit{Consist.}, and \textit{Whit.} indicate contrastive, non-contrastive, consistency regularization and whitening based SSL methods, respectively.}
\begin{tabular}{l c c c c  c c c}
\toprule[1pt]
 Method  &Type & Epochs & Batch Size  & Negatives &Asymmetry  & Whitening   & Top-1 \\ 
\midrule
NPID~\cite{wu2018unsupervised}   &Cont.     &$200$  & $256$   &\Checkmark    &\XSolidBrush  &\XSolidBrush  & 54.0 \\
LA~\cite{zhuang2019local}        &Cont.     &$200$  & $128$   &\Checkmark    &\XSolidBrush  &\XSolidBrush  & 60.2 \\
MoCo~\cite{he2020momentum}       &Cont.     &$200$  & $256$   &\Checkmark    &\Checkmark    &\XSolidBrush  & 60.6 \\
CPC v2~\cite{henaff2020data}     &Cont.     &$200$  & $512$   &\Checkmark    &\XSolidBrush  &\XSolidBrush  & 63.8 \\
SimCLR~\cite{chen2020simple}     &Cont.     &$200$  & $256$   &\Checkmark    &\XSolidBrush  &\XSolidBrush  & 64.3 \\
SimCLR~\cite{chen2020simple}     &Cont.     &$200$  & $4096$  &\Checkmark    &\XSolidBrush  &\XSolidBrush  & 66.8\\
MoCo v2~\cite{chen2020improved}  &Cont.     &$200$  & $256$   &\Checkmark    &\Checkmark    &\XSolidBrush  & 67.5 \\
CO2~\cite{CO2_ICLR2021}          &Consist.  &$200$  & $256$   &\Checkmark    &\XSolidBrush  &\XSolidBrush  & 68.0 \\
SimSiam~\cite{chen2021exploring} &Non-Cont. &$200$  & $256$   &\XSolidBrush  &\Checkmark    &\XSolidBrush  & 70.0 \\
BYOL~\cite{grill2020bootstrap}   &Non-Cont. &$200$  & $4096$  &\XSolidBrush  &\Checkmark    &\XSolidBrush  & 70.6 \\
\\[-1em]
\midrule 
\\[-1em]
\textbf{SimWhitening} (Ours)     &Cont. \& Whit.  &$200$  & $512$   &\Checkmark    &\Checkmark  &\Checkmark    & \textbf{71.6} \\ 
\textbf{SimTrace} (Ours)         &Non-Cont.       &$200$  & $512$   &\XSolidBrush  &\XSolidBrush  &\Checkmark    & \textbf{69.1} \\ 
\textbf{SimTrace} (Ours)         &Non-Cont.       &$200$  & $512$   &\XSolidBrush  &\Checkmark  &\Checkmark    & \textbf{71.7} \\ 
\textbf{SimAffinity} (Ours)      &Cont. \& Consist.  &$200$  & $512$   &\Checkmark    &\Checkmark  &\XSolidBrush  & \textbf{73.8} \\
\\[-1em]
\midrule 
\\[-1em]
\\[-1em]
SwAV~\cite{caron2020unsupervised}        &Non-Cont. &$400$    & $4096$  &\XSolidBrush  &\Checkmark    &\XSolidBrush  & 70.1 \\
W-MSE~\cite{W_MSE_ICML2021}              &Whit.     & $400$   & $1024$  &\XSolidBrush  &\XSolidBrush  &\Checkmark    & 72.5 \\
Zero-CL~\cite{Zero-CL}                   &Whit.     & $400$   & $1024$  &\XSolidBrush  &\XSolidBrush  &\Checkmark    & 72.6 \\
Barlow Twins~\cite{zbontar2021barlow}    &Whit.     & $1000$  & $2048$  &\XSolidBrush  &\XSolidBrush  &\Checkmark    & 73.2 \\
VICReg~\cite{VICReg_ICLR2022}            &Whit.     & $1000$  & $2048$  &\XSolidBrush  &\XSolidBrush  &\Checkmark    & 73.2 \\
\bottomrule
\end{tabular} 
\vspace{-3mm}
\label{table:sota_results_cmp_imagenet1k}
\end{table*}

\subsubsection{Comparison with the State of the Art}
We evaluate the UniCLR framework by implementing three variants with it, \emph{i.e.,} SimTrace, SimWhitening and SimAffinity, on ImageNet-1K. Specifically, the symmetric loss is used for both SimAffinity and SimWhitening, and the temperature $\tau\!=\!0.5$ is only used for SimAffinity. As for SimTrace, only Eq.~(\ref{fun3}) is employed without either the symmetric loss or temperature. According to the above ablation study, we also use momentum encoder, multi-crop and asymmetric predictor for all variants. All of our models are trained with a batch size of $512$ for $200$ epochs.

The linear evaluation results are reported in Table~\ref{table:sota_results_cmp_imagenet1k}. From the results, we can observe that all the proposed three variant methods can achieve notable performance on ImageNet-1K. To be specific, with the 200-epoch training, SimWhitening, SimTrace and SimAffinity all can obtain much better results than the existing contrastive, non-contrastive and consistency regularization based methods. Notably, SimAffinity obtains an accuracy of $73.8\%$, which gains $3.8\%$ and $3.2\%$ improvements over SimSiam and BYOL under the same 200-epoch training, respectively. Surprisingly, this result is tangibly better than Barlow Twins and VICReg that adopt a 1000-epoch training process. It is also worth mentioning that when no asymmetric designs are used, SimTrace obtains a remarkable accuracy of $69.1\%$, which confidently demonstrates that SimTrace can naturally avoid the mode collapse problem. Also, when the asymmetric designs are further used as training tricks, the performance of SimTrace is significantly improved from $69.1\%$ to $71.7\%$ ($2.6\%$ absolute improvements). We believe the promising potential of the proposed UniCLR has already been verified in the above experiment. On the other hand, as a unified framework, UniCLR also has the promising scalability for the future research.

\begin{figure*}[!tp]
\centering
\includegraphics[width=0.66\textwidth]{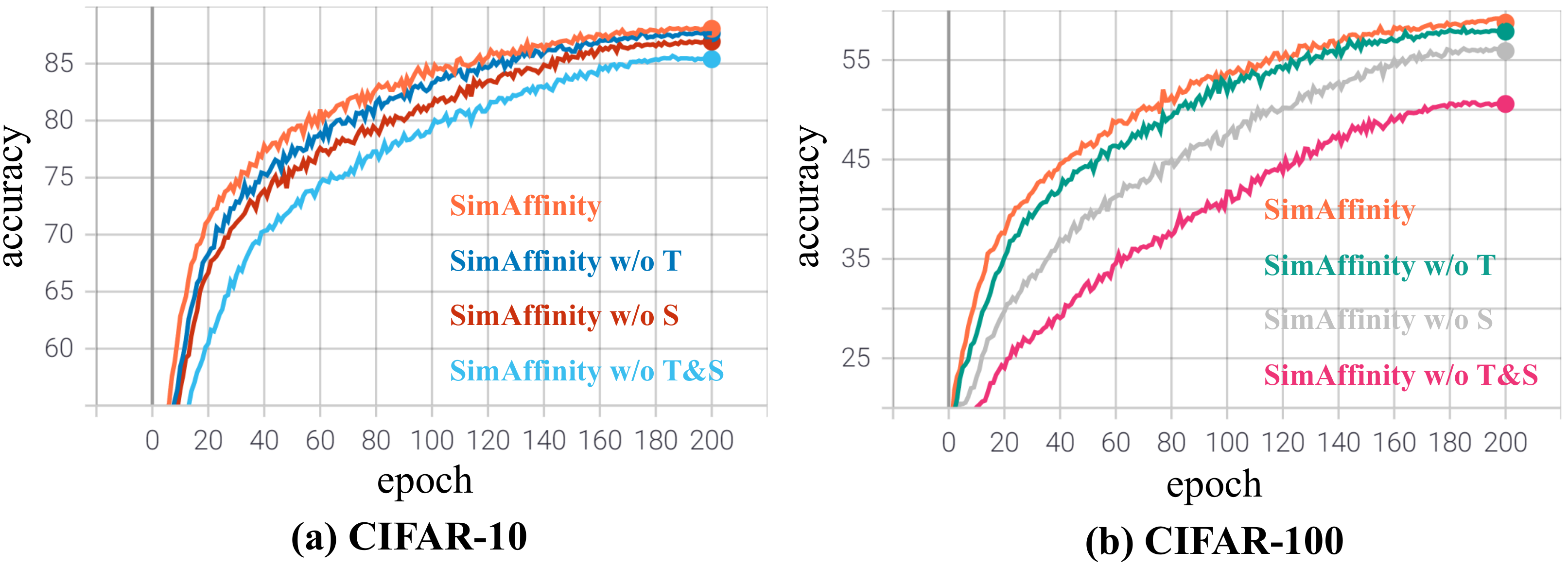}
\vspace{-2mm}
\caption{\textbf{$k$-NN evaluation on CIFAR-10 and CIFAR-100.} (a) Experimental results on CIFAR-10. (b) Experimental results on CIFAR-100. SimAffinity w/o T, w/o S, and w/o T\&S indicate SimAffinity without temperature, without symmetric loss, and without both parts, respectively. All models are trained for $200$ epochs, and different colors denote different methods.}
\vspace{-3mm}
\label{KNN}
\end{figure*}

\subsection{Performance on Small Data}
\subsubsection{Comparison with the State of the Art}
We also perform the three variants of UniCLR, \emph{i.e.,} SimAffinity, SimWhitening and SimTrace, on three small-scale benchmark datasets, \emph{i.e.,} CIFAR-10~\cite{krizhevsky2009learning}, CIFAR-100~\cite{krizhevsky2009learning}, and ImageNet-100~\cite{tian2019contrastive}. Following the literature~\cite{tian2019contrastive,chen2020simple,chen2021exploring}, we adopt ResNet-18 as the backbone and train our models for $1000$ epochs on both CIFAR-10 and CIFAR-100. As for ImageNet-100, we employ ResNet-50 as the backbone and only train our models for $200$ epochs. Specifically, four representative methods, including SimCLR~\cite{chen2020simple}, SimSiam~\cite{chen2021exploring}, DINO~\cite{caron2021emerging} and Zero-CL~\cite{Zero-CL}, are taken as competitors.

The results are reported in Table~\ref{table:small_datasets}. As seen, on both CIFAR-10 and ImageNet-100, all the three variants of UniCLR could achieve remarkable results, and SimAffinity consistently performs the best on both of two datasets. Specifically, SimAffinity achieves a new state-of-the-art result of $80.12\%$ on ImageNet-100 with 200-epoch training, which is higher than SimCLR, SimSiam, DINO and Zero-CL with 400-epoch training by $2.64\%$, $3.08\%$, $5.28\%$ and $0.86\%$, respectively. On CIFAR-100, both SimWhitening and SimAffinity still can obtain competitive results on par with the competitors except Zero-CL. Overall, the effectiveness of the proposed UniCLR framework on small-scale data also has been demonstrated.

\begin{table}[!tp] \small
\extrarowheight=-2pt
\tabcolsep=3pt
\centering
\caption{\textbf{Linear evaluation on three small datasets.} The top-$1$ accuracy (\%) is reported. For each dataset, the best and second best methods are highlighted.}
\vspace{-2mm}
\begin{tabular}{l l  c c c}
\toprule[1pt]
 Dataset  & Method  & Backbone & Epochs & Top-1 \\
\midrule
\multirow{7}{*}{CIFAR-10}   &SimCLR~\cite{chen2020simple}          & ResNet-18 & 1000 & 90.74 \\
                            &SimSiam~\cite{chen2021exploring}      & ResNet-18 & 1000 & 90.51 \\
                            &DINO~\cite{caron2021emerging}         & ResNet-18 & 1000 & 89.19 \\
                            &Zero-CL~\cite{Zero-CL}                & ResNet-18 & 1000 & 90.81 \\
                            \cmidrule{2-5}
                            &\textbf{SimTrace}               & ResNet-18 & 1000 & 90.07 \\
                            &\textbf{SimWhitening}           & ResNet-18 & 1000 & \textbf{90.91} \\
                            &\textbf{SimAffinity}            & ResNet-18 & 1000 & \textbf{91.12} \\
                            
\midrule
\multirow{7}{*}{CIFAR-100}  &SimCLR~\cite{chen2020simple}          & ResNet-18 & 1000 & 65.39 \\
                            &SimSiam~\cite{chen2021exploring}      & ResNet-18 & 1000 & 65.86 \\
                            &DINO~\cite{caron2021emerging}         & ResNet-18 & 1000 & 66.38 \\
                            &Zero-CL~\cite{Zero-CL}                & ResNet-18 & 1000 & \textbf{70.33} \\
                            \cmidrule{2-5}
                            &\textbf{SimTrace}               & ResNet-18 & 1000 & 63.09 \\
                            &\textbf{SimWhitening}           & ResNet-18 & 1000 & 66.12 \\
                            &\textbf{SimAffinity}            & ResNet-18 & 1000 & \textbf{66.76} \\
\midrule
\multirow{7}{*}{ImageNet-100}   &SimCLR~\cite{chen2020simple}      & ResNet-50 & 400 & 77.48 \\
                                &SimSiam~\cite{chen2021exploring}  & ResNet-50 & 400 & 77.04 \\
                                &DINO~\cite{caron2021emerging}     & ResNet-50 & 400 & 74.84 \\
                                &Zero-CL~\cite{Zero-CL}            & ResNet-50 & 400 & 79.26 \\
                                \cmidrule{2-5} 
                                &\textbf{SimTrace}           & ResNet-50 & 200 & 79.49 \\
                                &\textbf{SimWhitening}       & ResNet-50 & 200 & \textbf{79.83} \\
                                &\textbf{SimAffinity}        & ResNet-50 & 200 & \textbf{80.12} \\
\bottomrule
\end{tabular} 
\vspace{-3mm}
\label{table:small_datasets}
\end{table}

\subsubsection{Ablation Study of SimAffinity on Small Data}
We also conduct an ablation study on two small datasets, \emph{i.e.,} CIFAR-10 and CIFAR-100, to further investigate the effectiveness of the temperature and the proposed symmetric loss. To be specific, SimAffinity and its three variants, \emph{i.e.,} \textit{SimAffinity without temperature (w/o T for short)}, \textit{SimAffinity without symmetric loss (w/o S for short)} and \textit{SimAffinity without temperature and symmetric loss (w/o T\&S for short)}, are trained for $200$ epochs on both datasets. After that, the $k$-NN classification accuracy curves during pre-training are illustrated in Fig.~\ref{KNN}.

From the curves, we are able to observe that (1) either temperature alone or symmetric loss alone can significantly accelerates the convergence of training of the unsupervised models; (2) the symmetric loss makes the convergence faster than using temperature, which is consistent with the finding in the ablation study on ImageNet-1K with a linear evaluation protocol; (3) similar to the observation in Section~\ref{section_experiments}, under the $k$-NN evaluation protocol, integrating the symmetric loss with the temperature together can clearly produce further improvements over each individual part alone.

\section{Conclusion}
In this paper, we present a simple unified contrastive learning representation framework \textit{UniCLR} and its three variants \textit{SimAffinity}, \textit{SimWhitening} and \textit{SimTrace}, from a new perspective of affinity matrix. In addition, a simple symmetric loss for alleviating the false negative problem is also proposed. From the extensive analyses and experiments, we draw the following conclusions: (1) From the affinity matrix perspective, many existing SSL methods can be explained and unified with the UniCLR framework, and the simple UniCLR framework achieves superior results than the state of the art on multiple benchmarks; (2) The effect of temperature is confirmed in that it can notably accelerate the convergence of model's training process; (3) The proposed symmetric loss can also  clearly accelerate the convergence of models and significantly boost the performance,  showing how to effectively alleviate the false negative problem is a promising direction in SSL; (4) Whitening is a key to help non-contrastive SSL methods avoid trivial solutions but is harmful to contrastive SSL. Integrating the proposed symmetric loss with whitening  can significantly alleviate this problem; and (5) The proposed UniCLR provides excellent scalability. Future work will explore how to unify other kinds SSL methods, such as clustering based methods, into this framework.

{\small
\bibliographystyle{ieee_fullname}
\bibliography{egbib}
}

\appendix

\section{Datasets}
\label{appendix_A}
There are four benchmark datasets used in this work, including ImageNet-1K~\cite{deng2009imagenet}, ImageNet-100~\cite{tian2019contrastive}, CIFAR-10~\cite{krizhevsky2009learning}, and CIFAR-100~\cite{krizhevsky2009learning}. The details of these dataset are as follows:
\begin{itemize}
    \item \textbf{ImageNet-1K} is a widely-used large-scale classification dataset, which spans $1000$ classes with $1.2$ million training images and $50000$ validation images. We use the $1.2$ million images without labels for unsupervised pre-training and the $50000$ images for the final test. The image size is set to $224\times 224$.
    \item \textbf{ImageNet-100} is a highly-used subset of ImageNet-1K~\cite{deng2009imagenet} for unsupervised representation learning. It is composed of $100$ classes randomly selected from ImageNet-1K and has a total number of $0.13$ million images. Also, the image size is set to $224\times 224$.
    \item \textbf{CIFAR-10} is a small-scale dataset and contains a total number of $60000$ $32\times32$ images in $10$ classes, where there are $6000$ images in each class. Specifically, we take $50000$ and $10000$ images for training and test, respectively.
    \item \textbf{CIFAR-100} is also a small dataset, which consists of $100$ classes with $600$ $32\times32$ images in each class. Following the literature, we take $50000$ images ignoring the labels for training, and take the remaining images for test.
\end{itemize}

\section{Proof of Proposition ~\ref{theorem1}}
\label{appendix_B}
We present the following Proposition~\ref{theorem1} and proof to further explain Eq.~(\ref{fun4}) in the main paper, showing that the affinity matrix and covariance matrix can be converted into each other. In this sense, the affinity matrix and covariance matrix can be essentially optimized with the same single trace loss function of SimTrace.  

\begin{proposition}\label{theorem1}
Suppose $Z\in\mathbb{R}^{D\times N}$, $Z'\in\mathbb{R}^{D\times N}$, $\hat{Z}=\{Z, Z'\}\in\mathbb{R}^{D\times 2N}$, $\Sigma$ is the covariance matrix calculated from $\hat{Z}$, which is positive semi-definite, and $\Sigma^{-\frac{1}{2}}$ is the square root of the inverse of $\Sigma$, it can be obtained that 
\begin{equation}\label{FunT1} \small
\begin{split}
 \text{Trace}\big(Z^\top\Sigma^{-1}Z') = \text{Trace}\big(\Sigma^{-1}Z'Z^\top) \,.
\end{split}
\end{equation}
\end{proposition}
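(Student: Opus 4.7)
The plan is to prove the identity by a single application of the cyclic property of the trace. Recall that for any two matrices $A$ and $B$ with $A \in \mathbb{R}^{m \times n}$ and $B \in \mathbb{R}^{n \times m}$, both products $AB$ and $BA$ are square and $\text{Trace}(AB) = \text{Trace}(BA)$. So I would first group the three-factor product on the left-hand side by setting $A = Z^\top \in \mathbb{R}^{N \times D}$ and $B = \Sigma^{-1} Z' \in \mathbb{R}^{D \times N}$. Then $AB$ is $N \times N$ and $BA$ is $D \times D$, so the cyclic swap is well-defined.

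With that grouping, the identity follows in one line:
\begin{equation*}
\text{Trace}(Z^\top \Sigma^{-1} Z') \;=\; \text{Trace}(AB) \;=\; \text{Trace}(BA) \;=\; \text{Trace}(\Sigma^{-1} Z' Z^\top),
\end{equation*}
which is exactly the claim. No properties of $\Sigma$ beyond its shape ($\Sigma^{-1} \in \mathbb{R}^{D \times D}$) are actually used in the algebraic manipulation, so the assumption that $\Sigma$ is positive semi-definite and that $\Sigma^{-\frac{1}{2}}$ exists is only needed to ensure $\Sigma^{-1}$ is a well-defined $D \times D$ matrix (one may replace it with a pseudo-inverse in the degenerate case without affecting the argument).

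There is no real obstacle here; the statement is just the cyclic invariance of the trace applied to a product of three rectangular matrices, and the only bookkeeping step is verifying that the dimensions of the two halves in the chosen grouping are compatible for the cyclic swap. Consequently, the proof in the appendix is expected to be only a couple of lines long, with the main value of the proposition being the interpretive observation in the main text that the same scalar can be read either as a sum over the diagonal of a whitened affinity matrix or as a trace involving the cross-covariance $Z' Z^\top$, rather than any technical depth in the derivation itself.
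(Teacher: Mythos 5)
Your proof is correct and rests on the same core fact as the paper's, namely cyclic invariance of the trace, but you take a slightly cleaner route. The paper first factors $\Sigma^{-1}=\Sigma^{-\frac{1}{2}}\Sigma^{-\frac{1}{2}}$, applies the four-factor cyclic identity $\text{Trace}(ABCD)=\text{Trace}(BCDA)$ to $\text{Trace}\big(Z^\top\Sigma^{-\frac{1}{2}}\Sigma^{-\frac{1}{2}}Z'\big)$, and then recombines the two square-root factors. You instead group $A=Z^\top$, $B=\Sigma^{-1}Z'$ and apply $\text{Trace}(AB)=\text{Trace}(BA)$ in one step, never introducing $\Sigma^{-\frac{1}{2}}$. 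The outcome is identical, but your version makes explicit a point the paper's derivation obscures: the positive semi-definiteness of $\Sigma$ and the existence of $\Sigma^{-\frac{1}{2}}$ play no role in the identity itself, only in guaranteeing that $\Sigma^{-1}$ (or a suitable pseudo-inverse) is a well-defined $D\times D$ matrix. That is a modest but real simplification, and it also removes the superficial impression that the square-root decomposition is doing work in the argument.
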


\begin{proof}
Specifically, according to the basic properties of matrix and $\Sigma^{-1}=\Sigma^{-\frac{1}{2}}\Sigma^{-\frac{1}{2}}$, we have
\begin{equation}\label{funT2}\small
\begin{split}
\text{Trace}\big(Z^\top\Sigma^{-1}Z')
    &=  \text{Trace}\big(Z^\top \Sigma^{-\frac{1}{2}}\Sigma^{-\frac{1}{2}}Z'\big)\\
    &\footnotesize\text{\emph{\big(Cyclic property of trace, i.e., Trace(ABCD)=Trace(BCDA)\big)}}\\
    & =  \text{Trace}\big(\Sigma^{-\frac{1}{2}}\Sigma^{-\frac{1}{2}}Z'Z^\top)\\
    &\footnotesize\text{\emph{\big(because $\Sigma^{-1}=\Sigma^{-\frac{1}{2}}\Sigma^{-\frac{1}{2}}$\big)}}\\
    &=   \text{Trace}\big(\Sigma^{-1}Z'Z^\top\big)\,.
\end{split}
\end{equation}
Therefore, we can obtain Eq.~(\ref{FunT1}). Hence proved.
\end{proof}

\begin{algorithm*}[!t]
\caption{UniCLR Pseudocode, PyTorch-like}
\label{algorithm1}
\begin{lstlisting}[language=Python]
Whitening = False  # whitening option
Trace = False      # SimTrace option
t     = 0.5        # temperature
gamma = 0.01       # hyper-parameter 
for X1, X2 in loader:
    
    Z1  = self.encoder(X1)   # D * N, N is the batch size 
    Z2  = self.encoder(X2)   # D * N, D is the feature dimension
    
    affinity = self.cal_affinity_matrix(z1, z2)       # N * N
    
    if Trace:
        loss = -torch.trace(affinity)                 # SimTrace loss
    else:
        targets = range(0, Z1.shape[0])
        loss1   = CrossEntropyLoss(affinity, targets) # CE loss 
        loss2   = torch.norm(affinity - affinity.T)   # Symmetry loss
        loss    = loss1 + gamma * loss2
        
    loss.backward()   # back propagation
    optimizer.step()  # SGD update

def cal_affinity_matrix(Z1, Z2):
    if Whitening:
        Z1, Z2 = whitening(Z1, Z2)    # using whitening operation
    Z1 = torch.norm(Z1, p=2, dim=-1)
    Z2 = torch.norm(Z2, p=2, dim=-1)
    affinity_matrix = torch.matmul(Z1, Z2.T)
    
    return affinity_matrix / t        # scaling by temperature
\end{lstlisting}
\end{algorithm*}

\section{SimAffinity Loss vs. InfoNCE Loss}
\label{appendix_C}
In this section, we present further explanation for the affinity matrix based contrastive learning (SimAffinity) loss and reveal the relationship between the proposed SimAffinity loss and InfoNCE loss. We first recall the InfoNCE loss in SimCLR as below: 
\begin{equation}
\mathcal{L}_\text{InfoNCE}=-\log\frac{\exp(\bm{z}^\top_i\Tilde{\bm{z}}_i/\tau)}{\sum_{j=1}^{2N}\mathbb{I}_{j\neq i}\exp(\bm{z}^\top_i \bm{z}_j/\tau)}\,,
\end{equation}
where $\bm{z}_i$ and $\Tilde{\bm{z}}_i$ are positive samples, $\bm{z}_j$ denotes the negative sample, $\mathbb{I}_{k\neq i}$ is an indicator function, $N$ is the number of batch size and $\tau$ is a temperature.

As for the proposed SimAffinity loss (\ie, \textit{Eq.~(\ref{fun1}) in the main paper}), we can take the first row of the affinity matrix as an example, whose cross-entropy loss can be formulated as follows:
\begin{equation}
 \mathcal{L}_\text{CE}=-\log\frac{\exp(\bm{z}^\top_i\Tilde{\bm{z}}_i/\tau)}{\sum_{j=1}^{N}\exp(\bm{z}^\top_i \bm{z}_j/\tau)}\,.
\end{equation}

According to the above two formulations, we can see that SimAffinity is essentially similar to the InfoNCE based methods. The key difference is that our SimAffinity only considers N-1 rather than 2N-2 negative samples for each positive sample. On the one hand, in terms of calculation, SimAffinity is more efficient than the InfoNCE based methods for only considering N-1 negative samples. Moreover, SimAffinity can be implemented by pure matrix manipulation and thus enjoys higher efficiency, while SimCLR can not. On the other hand, more importantly, SimAffinity can naturally enjoy some wonderful properties of the affinity matrix, such as using the proposed symmetric loss or performing the whitening operation. Also, the experimental results in Table~2 of the main paper have successfully demonstrated that SimAffinity can significantly outperform the existing InfoNCE based SSL methods.

\section{Pseudo Code for UniCLR}
\label{appendix_D}
The pseudo-code of the proposed UniCLR is shown in Algorithm~\ref{algorithm1}.

\section{Further Explanation for Symmetric Loss}
\label{appendix_E}
In the main paper, we claim that the proposed symmetric loss can alleviate the false negative problem to some extent. Therefore, in this section, we will give some intuitive examples to further explain the above claim.

Suppose we have $Z=\{z_1, \cdots, z_N\}$ and $Z'=\{z'_1, \cdots, z'_N\}$, which denote two different views (feature representations) of the same mini-batch of $N$ images. That is to say, any pair of $<z_i, z'_i>|^N_{i=1}$ is a true positive pair. Intuitively, given a certain positive pair of $<z_1, z'_1>$, to tackle the false negative problem, we can make the relationships between $z_1$ and all the other $N-1$ negative samples, \ie, $A=\{z^\top_1 z_2, \cdots, z^\top_1 z_N\}$ in one view, be similar to $B=\{z'^\top_1 z'_2, \cdots, z'^\top_1 z'_N\}$ in another view. To be specific, if there is a false negative sample $z_i\in Z$, it will naturally has a high similarity with the true positive sample $z_1$, because their real semantic labels are the same. The principle is that if we can make the similarity between $z'_1$ and $z'_i$ (another view of the false negative sample $z_i$) in another view as high as the similarity between $z_1$ and $z_i$, we can alleviate the false negative problem to some extent. Therefore, a straightforward way to alleviate the above problem is that we can design a regularization term to close such a distance between $A$ and $B$ by minimizing $D(A,B)$.

To move forward a small step, our proposed symmetric loss further calculates a kind of cross positive-negative similarities between $Z$ and $Z'$ as $A'=\{z^\top_1 z'_2, \cdots, z^\top_1 z'_N\}$ and $B'=\{z'^\top_1 z_2, \cdots, z'^\top_1 z_N\}$, and make $A'\equiv B'$ with the mean squared error (through Frobenius norm of matrix). The advantage is that we can implement such a constraint as a simple symmetric operation on the affinity matrix, benefitting from using affinity matrix as the objective. More importantly, because of using such cross positive-negative similarities, our symmetric loss could benefit from the cross diversity, learning invariant representations.

To better understand the above principle, we can present another intuitive example. As explained in the main paper, to optimize a SSL model by the proposed SimAffinity loss in Eq.(1) in the main paper, we can calculate $P=\{z^\top_1 z'_1, z^\top_1 z'_2, \cdots, z^\top_1 z'_N\}$ (\ie, the first row of the affinity matrix) and make $P$ close to a one-hot vector $Q=\{1, 0, \cdots, 0\}$ by cross-entropy loss. Clearly, because of the one-hot ground truth, the similarity between $z_1$ and $z'_1$ will be optimized to be as large as possible. In contrast, the similarity between $z_1$ and all the $N-1$ negative samples will be equally optimized to be as small as possible, so as to the false negative samples. However, using the proposed symmetric loss as a regularization term, we can maintain the relative similarities between $z_1$ and all the other negative samples as stable as possible, especially the similarities with the false negative samples.

\end{document}